\documentclass[11pt]{article}

\usepackage[preprint]{acl}

\usepackage{times}
\usepackage{latexsym}
\usepackage[T1]{fontenc}
\usepackage[utf8]{inputenc}
\usepackage{microtype}
\usepackage{inconsolata}
\usepackage{graphicx}

\usepackage{booktabs}
\usepackage{multirow}
\usepackage{xcolor}
\usepackage{colortbl}
\usepackage{tabularx}
\definecolor{oursbg}{HTML}{FFF7E3}
\definecolor{headbg}{HTML}{E3EAF2}
\definecolor{gainc}{HTML}{D9481C}
\definecolor{demph}{gray}{0.5}
\definecolor{dropc}{HTML}{1F8A8A}
\newcommand{\gain}[1]{\textsubscript{\textcolor{gainc}{$\uparrow$#1}}}
\newcommand{\gainsig}[2]{$^{#2}_{\textcolor{gainc}{\uparrow#1}}$}
\newcommand{\drop}[1]{\textcolor{dropc}{$\downarrow$#1}}
\newcommand{\oursrow}{\rowcolor{oursbg}}
\newcommand{\headrow}{\rowcolor{headbg}}
\usepackage{pifont}
\usepackage{amsmath}
\usepackage{amssymb}
\usepackage{amsthm}
\usepackage{bm}
\usepackage{enumitem}
\usepackage{xspace}

\newtheorem{proposition}{Proposition}

\newcommand{\ourmethod}{\textsc{Madeleine}\xspace}
\newcommand{\obs}[1]{\textbf{Obs.~\ding{\numexpr181+#1\relax}}}

\title{Madeleine: Learning Involuntary Recall for Conversational Memory\\from Simulated Lives}

\author{Zhiyun Shi \\ Nanyang Technological University, Singapore \\ \texttt{ZHIYUN004@e.ntu.edu.sg}}

\begin{document}
\maketitle

\begin{abstract}
A long-term conversational assistant must recall the right memory at the right moment, yet the memory that matters most is often not similar to what the user says now.
Current systems recover such associations by letting an LLM reason at write or read time, at a cost of hundreds to over a thousand LLM calls per memory bank and up to several thousand context tokens per query. %
We argue that association is a learnable relevance: the pointwise mutual information of memories under how human lives unfold.
We introduce \textbf{\ourmethod}, which learns \textbf{amortized association}: offline, an LLM \textbf{life simulator} writes simulated lives, whose cue--trigger pairs teach a query encoder a \textbf{residual association} on top of frozen similarity; online, it calls no LLM and plugs into any vector memory by replacing only the query encoder.
On LoCoMo-Plus under the official protocol, \ourmethod
\textbf{(I)} reaches 66.6 when plugged into HyperMem, the highest among all systems evaluated under this protocol; %
\textbf{(II)} used alone, reaches the score of HyperMem as released (52.4 vs.\ 52.9) with zero LLM calls and about 1/21 of its answer context; %
and \textbf{(III)} lifts T-Mem by 26.2 points and significantly outperforms the same untrained backbone inside both systems. %
\end{abstract}

\section{Introduction}\label{sec:intro}

In Proust's novel, the taste of a madeleine dipped in tea brings back the narrator's childhood in Combray, although the cake resembles nothing in the town~\citep{proust1913,e_proust2024}.
Psychology calls this \textit{involuntary} memory: a cue that is related to the present, but not similar to it, surfaces without effort~\citep{berntsen2009,collins1975}.
Conversational assistants now need the same ability.
A user once said ``learning to say no has really reduced my stress''; months later she writes ``I took on that project anyway and I'm exhausted''.
The two turns share no content, yet an assistant that remembers the first should answer the second differently (Figure~\ref{fig:intro}, left).
This matters in practice: assistants such as ChatGPT now keep long-term memories of their users~\citep{e_chatgptmemory}, and memory retrieval runs at every turn of every conversation, so recall must be both associative and cheap enough for each request.

\begin{figure*}[t]
\centering
\begin{minipage}[c]{0.565\textwidth}\includegraphics[width=\linewidth]{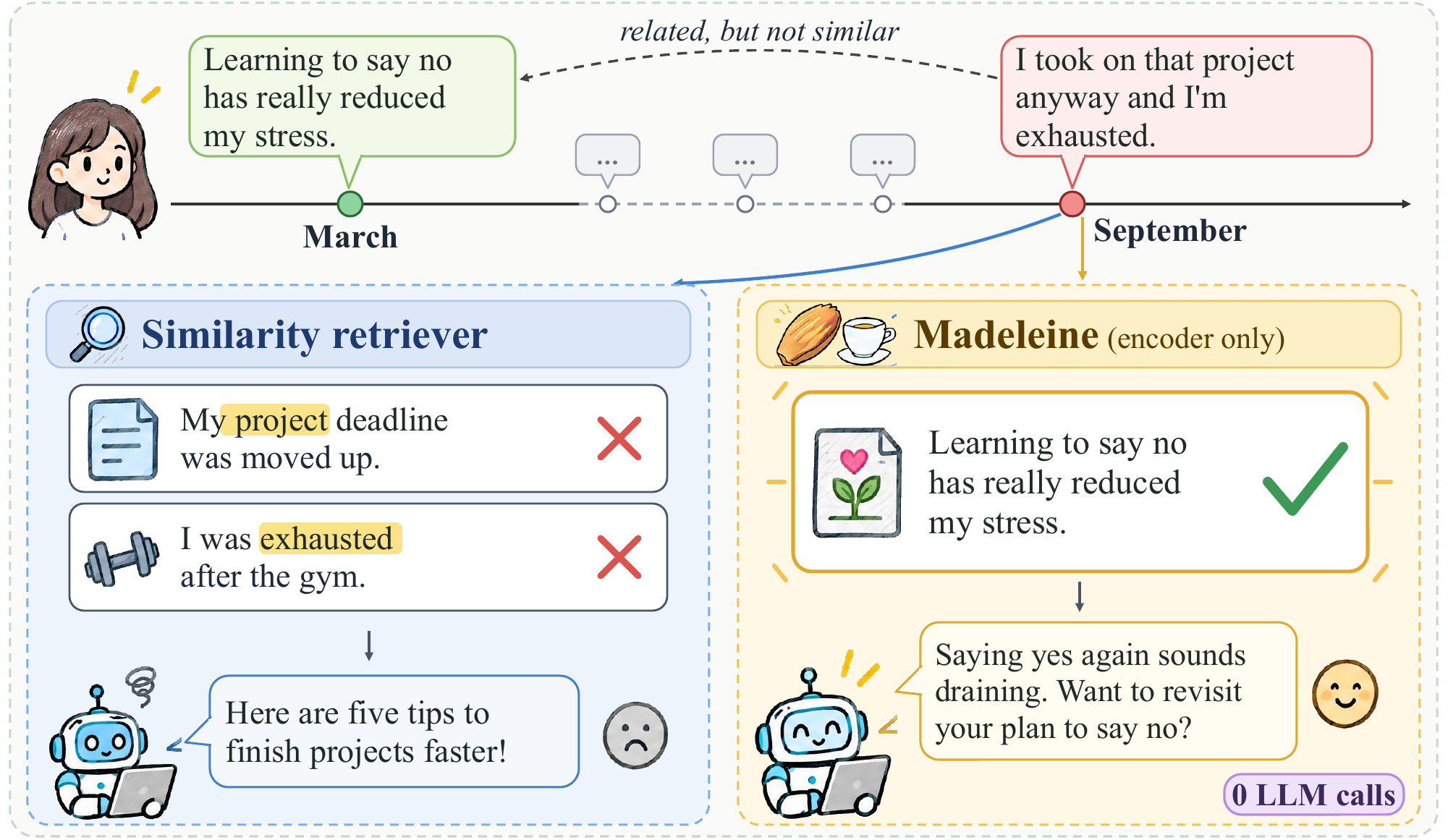}\end{minipage}\hfill
\begin{minipage}[c]{0.42\textwidth}\includegraphics[width=\linewidth]{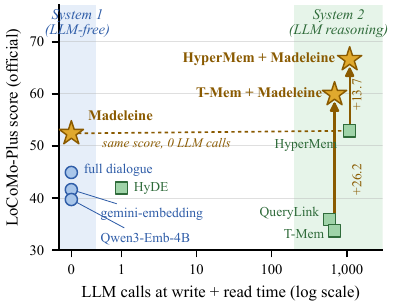}\end{minipage}
\vspace{-0.5em}
\caption{(\textbf{\textit{Left}}) Associative recall: the trigger shares no content with its cue. A similarity retriever returns memories that share surface words with the trigger, whereas \ourmethod recalls the cue without calling an LLM.
(\textbf{\textit{Right}}) LoCoMo-Plus score (official protocol) against LLM calls at write and read time (log scale). LLM-reasoning (System~2) memory systems lie to the right; standalone \ourmethod reaches the score of HyperMem as released with zero LLM calls, and plugging it into HyperMem and T-Mem (arrows) raises both scores at unchanged cost.}
\label{fig:intro}
\vspace{-0.8em}
\end{figure*}

Memory systems have approached this goal by adding ever more LLM reasoning: an LLM extracts salient facts~\citep{mem0,zep}, organizes memories into notes or hypergraphs~\citep{amem,hypermem}, predicts at write time the triggers under which a memory will be needed~\citep{tmem}, or rewrites each query into what the relevant memory might say~\citep{hyde,querylink}.
In the terms of dual-process theory~\citep{kahneman2011}, all of them realize association as \textit{System~2}, a deliberate LLM reasoning step spent on every memory or query.
It faces two dilemmas.

\textbf{Dilemma 1: similarity cannot see association.}
Underneath the LLM calls, every such system still retrieves with an embedder that equates relevance with semantic similarity.
On LoCoMo-Plus~\citep{locomoplus}, a benchmark built for associative recall, Qwen3-Embedding-4B~\citep{qwen3emb} and gemini-embedding~\citep{geminiemb} miss 33\% to 43\% of goal and value cues from their top 10, against about 11\% to 13\% of causal cues. %
On InMind~\citep{inmind}, the untrained 4B finds a personal fact in its top 10 for 96.8\% of direct questions but for only 5.6\% of indirect questions about the same fact. %
InMind observes that such associations rarely co-occur in text.

\textbf{Dilemma 2: System-2 remedies are costly and context-hungry.}
HyperMem calls an LLM 1,095 times to build one memory bank and feeds the reader 7,817 answer-input tokens per query. %
Its accuracy rests on this breadth rather than on ranking: with 541 tokens it scores 9.5 instead of 52.9 (Figure~\ref{fig:budget}). %
Read-time rewriting fares no better: QueryLink spends two LLM calls per query, yet its rewrites, encoded by Qwen3-Embedding-4B, score 34.4, below the 39.7 of the raw trigger. %

We take a different view. Human association is effortless (System~1), and it follows regularities of how lives unfold that are shared across people.
We therefore reformulate association as a \textbf{learnable relevance}.
The optimal InfoNCE critic is the pointwise mutual information (PMI) of its training distribution~\citep{infonce,poole2019}; an encoder trained on web text learns the PMI of web text, whereas one trained on pairs sampled from simulated lives learns the PMI of those lives, an approximation of \textbf{associative relevance} as the simulator portrays it (Section~\ref{sec:prelim}).
Since such pairs are not written down, we let an LLM simulate lives to write them.
The result is \textbf{amortized association}: an LLM thinks slowly once, offline, and a retriever recalls fast at every turn afterwards.
We instantiate this idea as \textbf{\ourmethod}: a \textbf{life simulator} writes (trigger, cue) pairs from several sources of simulated lives, and a \textbf{residual association} retriever adds a learned association score to the frozen similarity score, training only the query side so that the memory index is reused unchanged.

All results below use the official LoCoMo-Plus protocol with its gemini-2.5-flash judge (Figure~\ref{fig:intro}, right).
\ourmethod is
\textbf{\ding{182} state-of-the-art on LoCoMo-Plus}: plugged into HyperMem, it raises HyperMem from 52.9 to 66.6, the highest among all systems we evaluate under the official protocol; %
\textbf{\ding{183} LLM-free}: standalone, it reaches the score of HyperMem as released (52.4 vs.\ 52.9, not significantly different for any seed) with zero LLM calls at write and read time and about 1/21 of its answer context (375 vs.\ 7,817 tokens); %
\textbf{\ding{184} plug-and-play}: it lifts T-Mem by 26.2 points, and inside both systems it is significantly above the same backbone without training ($p=0.012$, and $0.047$ for T-Mem with three scenes); %
\textbf{\ding{185} harmless}: on ordinary LoCoMo QA, its three 4B seeds score 58.9 to 60.0 against 59.1 for the untrained backbone. %
Our contributions are:
\begin{itemize}[leftmargin=*,itemsep=0pt,topsep=2pt]
\item \textbf{\textit{Paradigm Reformulation}:} We recast associative memory retrieval as a learnable relevance, the PMI of memories under life trajectories, and propose \textbf{amortized association}, which moves the LLM's reasoning offline into a query encoder.
\item \textbf{\textit{Practical Solution}:} We propose \ourmethod, which learns this relevance from life-simulated supervision as a residual on top of frozen similarity and plugs into existing memory systems by replacing only the query encoder.
\item \textbf{\textit{Experimental Evaluation}:} Under the official LoCoMo-Plus protocol, \ourmethod achieves the highest score when plugged into HyperMem and reaches the score of HyperMem as released with zero LLM calls; controls attribute the gain to learned association (Section~\ref{sec:rq3}).
\end{itemize}
\section{Related Work}
\label{sec:related}

\paragraph{LLM-Constructed Memory Systems.}
Long-term conversational memory is mostly built by letting an LLM restructure the dialogue as it arrives:
\textbf{(I) fact and OS-style memories}: Mem0~\citep{mem0} and Zep~\citep{zep} extract salient facts or temporal graphs, and MemGPT~\citep{memgpt} and MemOS~\citep{memos} page memory like an operating system;
\textbf{(II) structured memories} such as A-Mem~\citep{amem}, SeCom~\citep{secom}, HippoRAG~\citep{hipporag}, AssoMem~\citep{assomem}, HyperMem~\citep{hypermem}, RippleMem~\citep{d_ripplemem} and CABLE~\citep{cable} segment or link memories into notes, knowledge or clue graphs, hypergraphs, recollection graphs or antecedent links;
\textbf{(III) anticipatory memories} such as T-Mem~\citep{tmem} and RootMem~\citep{d_rootmem} ask an LLM at write time to predict the triggers under which a memory will be needed, as Doc2Query~\citep{doc2query} predicts the queries a document answers.
The concurrent JustMem~\citep{justmem} chooses an access mode per query, and EdgeMem~\citep{edgemem} builds an LLM-free hypergraph that still retrieves by content, time and episode.
The LLM-built systems pay for association with write-time LLM calls, and a cue the LLM did not anticipate stays reachable only by similarity; \ourmethod replaces only their query encoder and is \textbf{plugged into} them.

\paragraph{Read-Time Reasoning and Query Rewriting.}
HyDE~\citep{hyde} and Query2doc~\citep{query2doc} let an LLM write a hypothetical document as the query; for conversational memory, LLMs generate keywords and implied events~\citep{querylink}, prospective probes~\citep{pgr} or abductive searches~\citep{adar}.
Information retrieval has seen this trajectory before: vocabulary mismatch was first attacked by query expansion with relevance and pseudo-relevance feedback~\citep{d_rocchio1971,d_lavrenko2001}, which were largely superseded by dense retrievers such as DPR~\citep{dpr} that learn the mismatch into the encoder.
Memory systems are now in their \textit{query-expansion era}, re-deriving every association with an LLM; \ourmethod takes the DPR step for \textbf{associative relevance}.

\paragraph{Retrievers Trained for Reasoning or Memory.}
Reasoning-intensive retrievers~\citep{reasonir,diver,reasonembed} are trained on synthesized queries whose relevance requires reasoning, and memory-specific models include the embedder HiNS~\citep{hins}, the reranker MemReranker~\citep{memreranker} and the feedback-driven RMM~\citep{rmm}, EAR~\citep{ear} and EARM~\citep{d_earm}; distilling a reader's judgments into the retriever~\citep{fidkd} is the closest precedent for amortizing an LLM's reasoning.
They learn relevance that is visible in the text, or rerank candidates that similarity has already retrieved; none is trained on cues and triggers that share no topic, and each one we evaluate falls below \ourmethod (Section~\ref{sec:rq3}). Query-side tuning over a frozen index follows DensePhrases~\citep{densephrases} and frozen-encoder adapters~\citep{searchadaptor}.

\paragraph{Synthetic Supervision and Simulated Lives.}
LLM-generated data is a standard way to train retrievers~\citep{inpars,promptagator,e5mistral,gecko}, but its queries are written from, and hence similar to, their passages.
LLM-simulated people~\citep{d_generativeagents,locomo,personamem,d_lifesim} have been used to simulate behaviour or build benchmarks, not to train retrievers.
Distilling System~2 into System~1~\citep{d_system2distill} is closest in spirit to amortized association, but it distills into a generator rather than an encoder.
Associative recall is evaluated by LoCoMo-Plus~\citep{locomoplus}, InMind~\citep{inmind} and LoCoMo-Conv~\citep{locomoconv}; where InMind finds such associations rarely written down, \ourmethod writes them down with \textbf{life-simulated supervision}.
\section{Problem Formulation and Preliminaries}
\label{sec:prelim}

\paragraph{Associative memory retrieval.}
A long-term conversational assistant keeps, for each user, a memory bank $\mathcal{M}=\{m_1,\dots,m_N\}$ whose entries are raw dialogue turns stored verbatim as ``speaker: utterance''.
At every turn the user produces a new utterance $q$, which we call the \textbf{trigger}.
Somewhere in $\mathcal{M}$ lies a \textbf{cue} $m^\star$, an earlier utterance that the assistant must recall to respond appropriately to $q$~\citep{locomoplus}.
A retriever scores every memory with $s(q,m)$ and passes the top-$K$, together with $q$, to a reader LLM. Our goal is to maximize $\Pr\!\left[m^\star \in \operatorname{top\text{-}}K_{m\in\mathcal{M}}\, s(q,m)\right]$ under a \textbf{System-1 constraint}: $s$ must be computed by encoders alone, so that neither writing nor reading invokes an LLM.

\paragraph{The similarity assumption.}
Vector memory systems instantiate $s$ with a pretrained text embedder $E$~\citep{mem0,hypermem,tmem}:
\begin{equation}
\label{eq:sim}
s_0(q,m)=\cos\!\big(E(q),E(m)\big).
\end{equation}
This choice equates relevance with semantic similarity; a cue carries \textbf{associative relevance} when it is relevant to the trigger but $s_0$ ranks many unrelated memories above it, as in Section~\ref{sec:intro}.

\paragraph{Contrastive learning estimates pointwise mutual information.}
Let $p(q,m)$ be a joint distribution over (trigger, cue) pairs.
The InfoNCE objective~\citep{infonce} scores the positive $m^+\!\sim p(m\mid q)$ against $n-1$ negatives drawn from the marginal $p(m)$:
\begin{equation}
\label{eq:infonce}
\mathcal{L}(f)=-\,\mathbb{E}\left[\log\frac{\exp\!\big(f(q,m^+)/\tau\big)}{\sum_{j=1}^{n}\exp\!\big(f(q,m_j)/\tau\big)}\right],
\end{equation}
where $\tau$ is a temperature.
Its optimal critic satisfies~\citep{infonce,poole2019}
\begin{equation}
\label{eq:pmi}
f^\star(q,m)/\tau=\operatorname{PMI}_p(q;m)+c(q),
\end{equation}
where $\operatorname{PMI}_p(q;m)=\log p(m\mid q)/p(m)$ and $c(q)$ does not affect the ranking of memories for a fixed trigger.
Hence an embedder trained on generic query--passage pairs~\citep{e5mistral,qwen3emb} approximates the PMI of \emph{generic text} (``similar, therefore relevant''), while the same objective on pairs sampled from how lives unfold approximates the PMI of \emph{life trajectories}, which is associative relevance; in practice we sample simulated lives, so the target is the simulator's PMI.
The question is how to learn the second without discarding the first.

\paragraph{Residual view.}
We keep $s_0$ fixed and learn only an additive association term $a$.

\begin{proposition}[Residual association]
\label{prop:residual}
Fix $s_0$ and let $p_{\mathrm{life}}$ be the life-trajectory distribution. Among scores of the form $s(q,m)=s_0(q,m)+a(q,m)$ with an unrestricted $a$, the InfoNCE minimizer under $p_{\mathrm{life}}$, with negatives drawn from its marginal, is
\begin{equation}
\label{eq:residual}
a^\star(q,m)=\tau\cdot\operatorname{PMI}_{p_{\mathrm{life}}}(q;m)-s_0(q,m)+\tau\cdot c(q).
\end{equation}
\end{proposition}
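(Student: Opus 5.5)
The plan is to reduce the statement to the optimal-critic characterization in Eq.~\eqref{eq:pmi}. The key observation is that the InfoNCE loss $\mathcal{L}$ in Eq.~\eqref{eq:infonce} depends on the score only through the function $f=s$. Since $a$ is unrestricted, the map $a\mapsto s_0+a$ is a bijection from all functions onto all functions, because $s_0$ is fixed. Minimizing $\mathcal{L}(s_0+a)$ over $a$ is therefore the same problem as minimizing $\mathcal{L}(f)$ over unrestricted critics $f$, followed by setting $a=f-s_0$.

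First I will make the optimization precise. Take $p=p_{\mathrm{life}}$, positives $m^+\sim p_{\mathrm{life}}(m\mid q)$, and the $n-1$ negatives i.i.d.\ from the marginal $p_{\mathrm{life}}(m)$. These are exactly the hypotheses under which Eq.~\eqref{eq:pmi} holds, as cited from \citet{infonce,poole2019}. Invoking that result, every minimizer $f^\star$ satisfies $f^\star(q,m)/\tau=\operatorname{PMI}_{p_{\mathrm{life}}}(q;m)+c(q)$ on the support.

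Second, I substitute $f^\star=s_0+a^\star$ and solve for $a^\star$. This gives $a^\star(q,m)=\tau\operatorname{PMI}_{p_{\mathrm{life}}}(q;m)+\tau c(q)-s_0(q,m)$, which is Eq.~\eqref{eq:residual}. I will also note the invariance this inherits. The loss is unchanged when $f$ is shifted by any function of $q$ alone, because such a shift cancels between numerator and denominator. So $a^\star$ is determined only up to the additive $\tau c(q)$, and the ranking of memories for a fixed trigger is exactly the PMI ranking.

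The main obstacle is rigor in the cited optimality claim. For finite $n$, the InfoNCE minimizer is exactly $\log p(m\mid q)/p(m)$ up to $c(q)$ only in the population limit, or with $n\to\infty$. For finite $n$ the Bayes-optimal classifier still has this form: the posterior that index $j$ is the positive is proportional to $p(m_j\mid q)/p(m_j)$. The cross-entropy is minimized precisely when the softmax equals this posterior, which forces $f/\tau=\operatorname{PMI}+c(q)$ almost everywhere on the support.

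I will sketch this posterior computation briefly rather than rely solely on the citation. I will also state the usual caveats: equality holds only $p$-almost everywhere, and off-support values of $a$ are arbitrary. Everything else is the substitution above.
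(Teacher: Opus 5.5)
Your proposal is correct and matches the paper's own sketch: since $a$ is unrestricted, $a\mapsto s_0+a$ is a bijection, so you apply Eq.~\eqref{eq:pmi} to the summed score and subtract the fixed $s_0$. Your posterior argument is a sound supplement, though the characterization already holds for every finite $n\ge 2$ when the expected loss is minimized, so your aside that it needs $n\to\infty$ is unnecessary; a large $n$ matters only for the tightness of the mutual-information bound, not for the form of the optimal critic.
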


\noindent\textit{Proof sketch.} Apply Eq.~\eqref{eq:pmi} to the summed score $s$ and subtract the fixed $s_0$. \hfill$\square$

\smallskip
\noindent The association term thus only encodes the part of life-trajectory PMI that similarity leaves unexplained.
Hence training must use the summed test-time score: an $a$ trained alone targets $\tau\cdot\operatorname{PMI}$, so adding $s_0$ at test time counts similarity twice.
\section{Method}
\label{sec:method}

\subsection{Overview}
\label{sec:method-overview}

\ourmethod rests on \textbf{amortized association}: the associative reasoning that current memory systems ask an LLM to perform at every write or read (System~2) is performed once, offline, and distilled into a query encoder that recalls without any LLM call (System~1).
As Figure~\ref{fig:framework} shows, it has three components:
\ding{182} the life simulator, which produces \textbf{life-simulated supervision} (Section~\ref{sec:simulator});
\ding{183} the \textbf{residual association} retriever, which trains only the query side (Section~\ref{sec:retriever});
and \ding{184} a contrastive objective with \textbf{same-source negatives} (Section~\ref{sec:objective}).

\begin{figure*}[t]
\centering
\includegraphics[width=\textwidth]{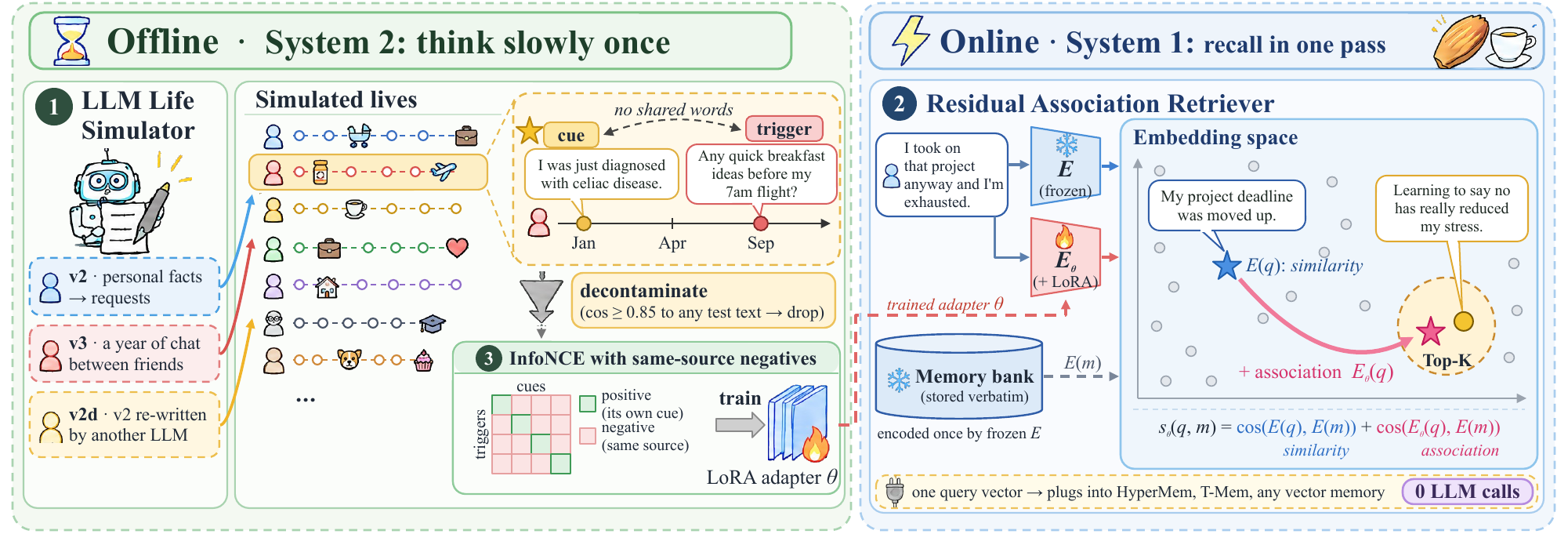}
\caption{Overview of \ourmethod.
(\textbf{\textit{Left, offline / System~2}}) An LLM life simulator writes simulated lives from three sources; each earlier statement becomes a cue and each later utterance a trigger that does not hint at it. The decontaminated pairs train a query-side LoRA adapter with InfoNCE over same-source facts.
(\textbf{\textit{Right, online / System~1}}) Memories are stored verbatim and embedded once by the frozen encoder $E$; the trigger is encoded by $E$ and by the adapted $E_\theta$, and the similarity and association scores are summed. A single query vector is the plug-in interface to a host system such as HyperMem.}
\label{fig:framework}
\end{figure*}

\subsection{Life Simulator}
\label{sec:simulator}

Training pairs should be sampled from how lives unfold (Section~\ref{sec:prelim}), so we let an LLM write them.
Each call simulates one person and returns about ten earlier statements, each of a kind sampled from about twenty aspects of a life (from a diagnosis or a debt to a fear or a principle) and each with three later utterances that the statement should bring to mind and one direct question about it. %
We build three sources that differ in what they simulate and in which LLM writes them:
\textbf{v2} (concrete personal facts such as an illness, a diet or an obligation, followed by later requests to an assistant; written by gpt-4.1-mini~\citep{a_gpt41}),
\textbf{v3} (a year of chat between two friends, in which a goal, plan, experience or value develops into progress, a setback, a change of mind or a dilemma; written by DeepSeek-V4.1-Flash~\citep{deepseekv4}),
and \textbf{v2d} (the v2 prompt and personas re-written by DeepSeek-V4.1-Flash). %
All prompts forbid the later utterance from mentioning or hinting at the earlier topic, and none copies the wording or examples of any benchmark; direct questions keep ordinary lookup in the training distribution.
We prefer generators whose earlier and later utterances have a low mean cosine on a small pilot, since a high cosine teaches similarity rather than association, and we remove every training row whose cosine with any test text is at least 0.85. %
Each source has about 32K rows and costs under \$2 to write (Appendix~\ref{app:simulator}), and we use all three. %

\subsection{Residual Association Retriever}
\label{sec:retriever}

Let $E$ be a frozen pretrained embedder and $E_\theta$ the same network with a LoRA adapter $\theta$.
\ourmethod scores a memory $m$ for a trigger $q$ by
\begin{equation}
\label{eq:score}
\resizebox{0.93\columnwidth}{!}{$s_\theta(q,m)=\underbrace{\cos\!\big(E(q),E(m)\big)}_{\text{similarity}}+\underbrace{\cos\!\big(E_\theta(q),E(m)\big)}_{\text{association}}$},
\end{equation}
the residual form $s_0+a$ of Proposition~\ref{prop:residual}.
Memories are always encoded by the frozen $E$; only the trigger passes through the adapter.
We use Qwen3-Embedding-4B~\citep{qwen3emb} as $E$ and prefix every trigger with one fixed task instruction, as its convention prescribes; the untrained baseline uses the same instruction.

\paragraph{Why only the query side.}
Training only the query encoder yields \textbf{(I) index reuse}, since memory vectors are those of the base embedder, and \textbf{(II) zero write cost}, since writing stays one forward pass of $E$.
It also learns more association: adapting the memory side as well lowers associative recall on 4B (Section~\ref{sec:rq4}), and training with the summed test-time score makes the adapter learn the residual rather than a second copy of similarity (Proposition~\ref{prop:residual}).

\subsection{Training Objective}
\label{sec:objective}

Each training row is a pair $(q_i,m_i^+)$ from a source $\sigma(i)\in\{\text{v2},\text{v3},\text{v2d}\}$, where $q_i$ is a trigger or a direct question and $m_i^+$ its fact. With $\mathcal{F}_\sigma$ the training facts of source $\sigma$, we minimize InfoNCE (Eq.~\eqref{eq:infonce}) with the summed score over the negatives
\begin{equation}
\label{eq:negatives}
\resizebox{0.9\columnwidth}{!}{$\mathcal{N}_i=\{m\in\mathcal{F}_{\sigma(i)}\!\setminus\!\{m_i^+\}:\cos(E(m),E(m_i^+))\le 0.9\}$}. %
\end{equation}
Because the memory side is frozen, fact embeddings are computed once, so every row competes against the whole fact bank of its source (7{,}522 facts for v2) rather than a few dozen in-batch facts. %
Restricting negatives to the row's \textbf{own source}, which applies Proposition~\ref{prop:residual} within each source, matters when sources are mixed, because a fact from another source may describe a related aspect of life, and treating it as a negative would penalize a correct association.
We train LoRA of rank 32 with $\tau=0.05$, batch size 128 and learning rate $10^{-4}$ for one epoch (Appendix~\ref{app:hparams}). %

\subsection{Deployment and Cost}
\label{sec:deploy}

Since all vectors in Eq.~\eqref{eq:score} have unit norm, the summed score equals an inner product with a single query vector, $s_\theta(q,m)=\langle E(q)+E_\theta(q),E(m)\rangle$.
A host memory system therefore adopts \ourmethod as a \textbf{plug-in}: it replaces its query vector with the normalized $E(q)+E_\theta(q)$ and keeps its memories, index, lexical retrieval and any graph built on top; if it already embeds memories with the same backbone, as HyperMem does, its index is reused unchanged.
Writing costs one forward pass of $E$ and reading one extra query encoding, with zero LLM calls; the simulator, our only LLM, runs once offline for all users.
\begin{table*}[t]
\centering
\caption{Main results on LoCoMo-Plus (401 cognitive questions, official protocol: gpt-4.1-mini reader, gemini-2.5-flash judge); \textbf{Causal}--\textbf{Value}: the four relation types, \textbf{All}: overall. \textbf{Write}: LLM calls per memory bank; \textbf{Read}: LLM calls per query, excluding the answer call; \textbf{Tokens}: answer-input tokens per query. Retrievers pass $K{=}3$ turns to the reader. Standalone \ourmethod{} is the mean of three seeds; plug-in rows replace the host system's retrieval encoder with \ourmethod{} (seed~0), re-encoding T-Mem's memories with the frozen backbone. \textcolor{gainc}{$\uparrow$}: gain over the same system without \ourmethod{} (standalone: over the untrained backbone). Best per column in \textbf{bold}, second \underline{underlined}. T-Mem and JustMem report 74.81 and 62.59 under their own, non-comparable protocols (Appendices~\ref{app:protocol}, \ref{app:tmem}).}
\label{tab:main}
\footnotesize
\setlength{\tabcolsep}{3.8pt}
\setlength{\aboverulesep}{0pt}\setlength{\belowrulesep}{0pt}
\renewcommand{\arraystretch}{1.0}
\newlength{\gw}\setlength{\gw}{2.1cm}
\begin{tabular}{p{\gw}lrrrccccc}
\toprule
\headrow \textbf{Group} & \textbf{Method} & \textbf{Write} & \textbf{Read} & \textbf{Tokens} & \textbf{Causal} & \textbf{State} & \textbf{Goal} & \textbf{Value} & \textbf{All} \\
\midrule
 & gemini-embedding & 0 & 0 & 387 & 55.4 & 52.0 & 26.0 & 33.0 & 41.6 \\ %
 & text-embedding-3-large & 0 & 0 & 396 & 58.4 & 45.0 & 30.0 & 31.0 & 41.1 \\ %
 & Qwen3-Embedding-4B & 0 & 0 & 378 & 54.5 & 37.0 & 29.0 & 38.0 & 39.7 \\ %
 & Qwen3-Embedding-8B & 0 & 0 & 386 & 50.5 & 36.0 & 28.0 & 35.0 & 37.4 \\ %
\multirow{-5}{\gw}{\raggedright\itshape (a) Embedding models} & bge-m3 & 0 & 0 & 385 & 31.7 & 28.0 & 9.0 & 19.0 & 21.9 \\ %
\midrule
\multirow{-1}{\gw}{\raggedright\itshape (b) Full context} & Full dialogue & 0 & 0 & 18,776 & 49.5 & 49.0 & 41.0 & 40.0 & 44.9 \\ %
\midrule
 & QueryLink, native union & 590 & 2 & 1,301 & 39.6 & 41.0 & 32.0 & 31.0 & 35.9 \\ %
 & QueryLink, Qwen3-4B & 590 & 2 & 339 & 44.6 & 35.0 & 26.0 & 32.0 & 34.4 \\ %
\multirow{-3}{\gw}{\raggedright\itshape (c) Read-time LLM rewriting} & HyDE, Qwen3-4B & 0 & 1 & 370 & 51.5 & 45.0 & 42.0 & 29.0 & 41.9 \\ %
\midrule
 & ReasonIR-8B & 0 & 0 & 399 & 54.5 & 48.0 & 29.0 & 31.0 & 40.6 \\ %
\multirow{-2}{\gw}{\raggedright\itshape (d) Trained retrievers} & EAR-style adapter (fused) & 0 & 0 & 377 & 56.4 & 43.0 & 37.0 & 44.0 & 45.1 \\ %
\midrule
\oursrow \multirow{-1}{\gw}{\raggedright\itshape Ours} & \ourmethod{} (standalone) & 0 & 0 & 375 & 59.7\gain{5.2} & 57.3\gain{20.3} & 45.7\gain{16.7} & 46.7\gain{8.7} & 52.4\gain{12.7} \\ %
\midrule
 & T-Mem, 3 scenes & 689 & 0 & 307 & 48.5 & 40.0 & 25.0 & 21.0 & 33.7 \\ %
\oursrow  & \quad + \ourmethod{} & 689 & 0 & 305 & 62.4\gain{13.9} & \underline{68.0}\gain{28.0} & \underline{52.0}\gain{27.0} & \underline{57.0}\gain{36.0} & \underline{59.9}\gain{26.2} \\ %
 & HyperMem (released) & 1,095 & 0 & 7,817 & \underline{66.3} & 53.0 & 46.0 & 46.0 & 52.9 \\ %
\oursrow \multirow{-4}{\gw}{\raggedright\itshape (e) LLM-built memory systems} & \quad + \ourmethod{} & 1,095 & 0 & 7,814 & \textbf{72.3}\gain{6.0} & \textbf{71.0}\gain{18.0} & \textbf{59.0}\gain{13.0} & \textbf{64.0}\gain{18.0} & \textbf{66.6}\gain{13.7} \\ %
\bottomrule
\end{tabular}
\end{table*}

\section{Experiments}\label{sec:exp}

We study effectiveness (\textbf{RQ1}), cost (\textbf{RQ2}), whether the gain is \emph{learned association} (\textbf{RQ3}), design choices (\textbf{RQ4}), robustness and harmlessness (\textbf{RQ5}), and where association helps (\textbf{RQ6}).

\subsection{Experimental Setup}\label{sec:exp-setup}

\paragraph{Benchmark and protocol.}
We evaluate on LoCoMo-Plus~\citep{locomoplus}, the benchmark that targets associative recall in long conversations. It contains 401 cognitive questions. %
Each question inserts a \emph{cue} dialogue into one of the ten LoCoMo conversations~\citep{locomo} and later presents a \emph{trigger} that is semantically distant from the cue. The questions cover four relation types: causal (101 questions), state, goal and value (100 each). %
We run the official code verbatim: the reader is gpt-4.1-mini~\citep{a_gpt41} with temperature 0.3 and max\_tokens 1024, and the official judge prompt is scored by gemini-2.5-flash~\citep{b_gemini25}. %
The score is the percentage of questions judged correct, and every baseline is re-run under this same code, so systems differ only in what they place in the context slot reserved for the dialogue.
Memory units are raw turns, and the trigger is never stored (Appendix~\ref{app:protocol}).

\paragraph{Baselines.}
We compare against five groups of baselines.
\textbf{(a) Embedding models}: gemini-embedding~\citep{geminiemb}, text-embedding-3-large~\citep{openaiemb}, Qwen3-Embedding-4B and -8B~\citep{qwen3emb}, and bge-m3~\citep{bgem3}.
\textbf{(b) Full context}: the whole dialogue is given to the reader, as in the official baseline.
\textbf{(c) Read-time LLM rewriting}: QueryLink~\citep{querylink} with its original all-MiniLM-L6-v2 encoder~\citep{b_sbert}, its native multi-view union, or a frozen Qwen3-Embedding-4B, and HyDE~\citep{hyde} on our backbone.
\textbf{(d) Trained retrievers and rerankers}: ReasonIR~\citep{reasonir}, DIVER~\citep{diver}, ReasonEmbed~\citep{reasonembed}, an EAR-style adapter~\citep{ear} trained on exactly our data and objective, and Qwen3-Reranker~\citep{qwen3emb} and MemReranker~\citep{memreranker}.
\textbf{(e) LLM-built memory systems}: HyperMem~\citep{hypermem} and T-Mem~\citep{tmem}, run from their official code and prompts. HyperMem uses its default retrieval budget, and T-Mem passes its top-3 scenes to the reader to match $K$.

\paragraph{Implementation details.}
\ourmethod{} uses Qwen3-Embedding-4B~\citep{qwen3emb} as the frozen memory encoder and adds a query-side LoRA~\citep{lora} of rank 32. %
We train three seeds (Section~\ref{sec:objective}; Appendix~\ref{app:hparams}), pass $K{=}3$ retrieved turns to the reader in the main setting, and test significance with the exact McNemar test~\citep{mcnemar1947} per question.

\subsection{RQ1: Effectiveness}\label{sec:rq1}

Table~\ref{tab:main} reports the main systems under the official protocol; Table~\ref{tab:routes} lists every other route.

\obs{1} \textbf{Plugged into HyperMem, \ourmethod{} achieves the highest score under the official protocol.}
Replacing only HyperMem's query encoder with ours raises its score from 52.9 to 66.6 ($p=1.7{\times}10^{-7}$; three-seed mean 66.2), the best among all systems we evaluate and in every column of Table~\ref{tab:main}. %
The same swap lifts T-Mem from 33.7 to 59.9 ($p=7{\times}10^{-17}$). %

\obs{2} \textbf{Standalone, \ourmethod{} reaches the score of HyperMem as released and, at $K{=}3$, significantly outperforms every LLM-free baseline.}
With three retrieved turns, \ourmethod{} scores 52.4 on average over three seeds and the released HyperMem 52.9; the difference is not significant for any seed ($p=0.93$, $0.74$ and $1.0$; seed~0: $-0.5$ points, 95\% CI $[-6.2,+5.2]$). %
Every seed is significantly above its untrained backbone (by 12.0 to 13.5 points), the strongest embedding model (by 10.0 to 11.5) and the full dialogue (by 6.7 to 8.2; $p\le0.027$ in all nine comparisons), and above every method in (c) and (d) (Section~\ref{sec:rq3}). %

\subsection{RQ2: Efficiency}\label{sec:rq2}

\begin{figure}[t]
\centering
\includegraphics[width=\linewidth]{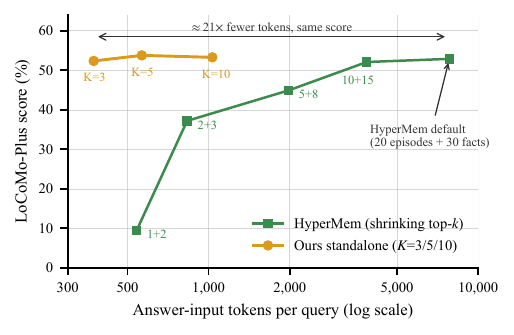}
\caption{Score versus answer-input tokens per query on LoCoMo-Plus (official protocol). The HyperMem curve shrinks its episode and fact budgets in its default 20:30 ratio; \ourmethod{} is standalone (three-seed means, $K{=}3/5/10$).}
\label{fig:budget}
\end{figure}

\obs{3} \textbf{\ourmethod{} reaches the score of HyperMem as released with about $1/21$ of its context and zero LLM calls.}
Standalone \ourmethod{} gives the reader 375 answer-input tokens per query, whereas the released HyperMem gives 7,817 for a score that is not significantly different, about 21 times more. %
HyperMem cannot close this gap by passing less evidence (Figure~\ref{fig:budget}): shrinking its own retrieval budget, it needs 3,862 tokens to tie \ourmethod{} ($p=1.0$), and below 2,000 tokens it drops sharply, to 44.9 at 1,977 tokens, which is significantly below \ourmethod{} ($p=0.022$), and to 9.5 at 541 tokens. %
HyperMem relies on breadth, not ranking: its default budget brings the cue episode into context for 73.6\% of the questions, \ourmethod{}'s three turns for 74.8\%. %

At write time, HyperMem, T-Mem and QueryLink call an LLM 1,095, 689 and 590 times per memory bank, whereas \ourmethod{} calls none (Table~\ref{tab:main}). %
At read time \ourmethod{} adds one encoder pass (70.5\,ms against 37.9\,ms for the untrained embedder), whereas one QueryLink rewriting call takes 1.27\,s (median). %

\subsection{RQ3: Is the Gain Learned Association?}\label{sec:rq3}

\begin{table}[t]
\centering
\caption{Same-backbone control on LoCoMo-Plus (official protocol). Within each row only the retrieval encoder changes: the system's \textbf{original} one (bge-m3 for T-Mem, Qwen3-Embedding-4B without its query instruction for HyperMem), the \textbf{untrained} Qwen3-Embedding-4B with the instruction, or \ourmethod{} (seed~0) on that backbone; memories are encoded by the frozen encoder of each column. \textcolor{gainc}{$\uparrow$}: gain over the untrained encoder; $^{*}p<0.05$, $^{**}p<10^{-4}$ (exact McNemar). All gains over the original encoder: $p<10^{-6}$.}
\label{tab:plugin}
\small
\setlength{\tabcolsep}{4pt}
\setlength{\aboverulesep}{0pt}\setlength{\belowrulesep}{0pt}
\renewcommand{\arraystretch}{1.15}
\begin{tabular}{lcc>{\columncolor{oursbg}}c}
\toprule
\headrow \textbf{System} & \textbf{Original} & \textbf{Untrained} & \textbf{\ourmethod{}} \\
\midrule
Standalone      & --   & 39.7 & 52.4\gainsig{12.7}{**} \\ %
T-Mem, 3 scenes & 33.7 & 53.9 & 59.9\gainsig{6.0}{*} \\ %
HyperMem        & 52.9 & 60.6 & \textbf{66.6}\gainsig{6.0}{*} \\ %
\bottomrule
\end{tabular}
\end{table}

Two controls rule out a stronger backbone, a better query format or extra computation as the cause.

\obs{4} \textbf{Within every system, \ourmethod{} significantly outperforms the same backbone without learned association.}
Table~\ref{tab:plugin} swaps only the retrieval encoder of each system; T-Mem's memories are then re-encoded by the frozen 4B.
Switching to the untrained Qwen3-Embedding-4B already helps both systems, lifting T-Mem from 33.7 to 53.9 because its original bge-m3 encoder is weaker, and HyperMem from 52.9 to 60.6 because its original code omits the query instruction that Qwen3-Embedding expects. %
\ourmethod{} adds a further 6.0 points over this same backbone in T-Mem ($p=0.047$) and HyperMem ($p=0.012$). %

\obs{5} \textbf{Every alternative route falls significantly below \ourmethod{}, and rerankers even hurt it.}
Table~\ref{tab:routes} (Appendix~\ref{app:routes}) compares four such families at three retrieved turns.
\textit{(i) Read-time LLM reasoning} does not substitute for learned association: the best QueryLink configuration scores 35.9 although it makes two LLM calls per query, and HyDE on our backbone scores 41.9, which is not significantly above the untrained backbone ($p=0.46$). %
\textit{(ii) Reasoning-oriented retrievers} such as ReasonIR do not significantly exceed the untrained backbone either. %
\textit{(iii) Rerankers} lower \ourmethod{} from 52.4 to 40.9 with Qwen3-Reranker-4B and to 31.9 with MemReranker-4B, because a cross-encoder judges whether a memory matches the trigger, while a cue by design does not. %
\textit{(iv) The same data in a different structure} is not enough: an EAR-style adapter trained on our exact data and objective ties the untrained backbone and, even with our fused score, stays 7.2 points below \ourmethod{} ($p=0.019$). %
The two controls show that the gain comes from association learned as a residual on the query side, not from the backbone, read-time LLM calls, or the training data alone.

\subsection{RQ4: Which Design Choices Matter?}
\label{sec:rq4}

Table~\ref{tab:ablation} (Appendix~\ref{app:ablation}) changes one design choice at a time and reports judge-independent Recall@10 on 0.6B, plus 4B for the decisive choices.

\obs{6} \textbf{Association must be learned on the query side only, as a residual on top of similarity.}
Training the memory side as well learns little association: on 4B a two-sided LoRA lowers LoCoMo-Plus cue Recall@10 from 71.1 to 69.6, whereas freezing the memory side lifts it to 78.8 with the same data, and on 0.6B full fine-tuning of both sides also lowers ordinary LoCoMo lookup from 67.3 to 55.1. %
Training the association term alone and summing only at test time costs 5.3 points (54.1 versus 59.4), as Proposition~\ref{prop:residual} predicts. %

\obs{7} \textbf{Mixing simulator sources helps only with same-source negatives.}
Naively mixing v2 and v3 is worse than either alone (50.1 versus 59.4 and 57.6), because related facts of the other source become false negatives; with same-source negatives the same mixture reaches 63.6, and a third source protects ordinary QA on 4B (Appendix~\ref{app:ablation}). %

\subsection{RQ5: Is the Gain Robust, and Is It Harmless?}
\label{sec:rq5}

\obs{8} \textbf{The gain holds across seeds, readers and backbones, and is largest when the reader sees few memories.}
All three seeds are significant at $K=3$ (Section~\ref{sec:rq1}), and with GPT-4o~\citep{gpt4o} as reader \ourmethod\ scores 50.1 against 40.4 for the untrained 4B ($p=9.1\times10^{-4}$). %
With more memories the untrained backbone catches up (seed means 53.8 versus 47.4 at $K=5$ and 53.3 versus 49.9 at $K=10$), because \ourmethod's advantage lies at the top of the ranking, where it places the cue first for 61.2\% of the items against 41.1\% (three-seed mean). %
Across backbones, cue Recall@10 on 0.6B, 4B and 8B reaches 60.6, 83.1 and 84.0, and 8B rises end to end from 36.7 (local run) to 53.4 ($p=2.4\times10^{-8}$). %

\obs{9} \textbf{On the 4B backbone, learned association does not hurt ordinary QA.}
On the original LoCoMo questions ($K=10$), the untrained 4B scores 59.1 and the three seeds of \ourmethod\ score 58.9, 59.2 and 60.0 ($p=0.86$, $0.96$ and $0.29$); 8B results are in Appendix~\ref{app:full}. %

\subsection{RQ6: Where Does Association Help?}
\label{sec:rq6}

\begin{figure}[t]
\centering
\includegraphics[width=\columnwidth]{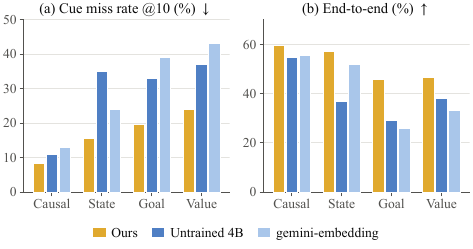}
\vspace{-1.5em}
\caption{LoCoMo-Plus by relation type ($\approx$100 items each): (a) cues missing from the top 10; (b) end-to-end score ($K=3$, three-seed mean for \ourmethod).}
\label{fig:relation}
\vspace{-1em}
\end{figure}

\obs{10} \textbf{Similarity fails on goals and values, and that is where \ourmethod\ helps.}
The untrained 4B and gemini-embedding miss only 10.9\% and 12.9\% of causal cues from the top 10 (Figure~\ref{fig:relation}). %
Goal and value cues are the blind spot, with miss rates of 33\% to 43\%, which \ourmethod\ lowers to 19.7\% and 24.0\%; end to end, on goal items it scores 45.7 against 29.0 and 26.0, whereas on causal items all three lie between 54.5 and 59.7. %
The gain is not tied to one simulator prompt: v2 alone, built around concrete conditions and requests to an assistant, lifts cue recall on all four relation types on 0.6B and most on state cues on 4B (Appendix~\ref{app:ablation}). %
In item \#272, John takes salsa lessons for his cousin's wedding and a month later gets everyone dancing at the office party; the untrained 4B ranks this cue 39th, whereas all three seeds of \ourmethod\ rank it first. %

On LoCoMo-Conv, whose implicit queries still name the topic, \ourmethod\ ties text-embedding-3-large, so the gain is specific to associative relevance (Appendix~\ref{app:other}). %
\section{Conclusion}\label{sec:conclusion}

We have argued that the associations a conversational memory needs are a learnable relevance, not something an LLM must re-derive at every turn.
\ourmethod realizes this as amortized association: a life simulator lets an LLM think slowly once, offline, and a query-side residual association turns what it wrote into fast, System-1 recall, which reaches the released HyperMem's score on its own and surpasses every evaluated system when plugged into it.
Simulating richer lives is the next step towards memory that, like Proust's madeleine, recalls what matters without being asked.
\section*{Limitations}

\textbf{Scope of the end-to-end evaluation.} We establish the end-to-end state of the art on LoCoMo-Plus under its official protocol; it is currently the benchmark built for associative recall in long conversations with an official evaluation protocol; evaluating on further benchmarks of this kind as they appear is a natural extension.
\textbf{Coverage of the simulated lives.} The life simulator writes English conversations, and the lives it writes follow the generating LLMs' notion of how a typical life unfolds. Associations that are specific to a culture, a community or an individual may therefore be learned less well, and extending the simulator to other languages and to more diverse lives is left to future work.

\section*{Ethics Statement}

Long-term memory makes an assistant remember personal information about its users, and better associative recall means that a remark can surface a related memory the user did not mention explicitly. Deployments should therefore let users inspect and delete what is remembered. All training data in this work is written by an LLM for simulated personas and contains no real personal data; the benchmarks we evaluate on are public and are used under their licenses. The human-written triggers in Appendix~\ref{app:full} were written by four graduate students who volunteered without payment, following written instructions; they composed fictional chat messages, and no personal information was collected.

\bibliography{refs}

\appendix
\raggedbottom\setcounter{dbltopnumber}{3}\renewcommand{\dbltopfraction}{0.9}\renewcommand{\dblfloatpagefraction}{0.6}

\section{Life Simulator: Prompts, Samples and Decontamination}
\label{app:simulator}

\textbf{Sources.} Table~\ref{tab:app-data} summarizes the three sources. Every call simulates one persona, whose age, occupation, place and the life areas and statement kinds to cover are sampled from fixed lists with a fixed seed. Each persona produces about ten earlier statements; each statement comes with three later messages and one direct question. A training pair maps a later message (or the direct question) to its earlier statement, and the persona's other statements act as natural hard negatives. The three sources together cost 3.13 dollars. %
None of the prompts copies the wording or examples of any benchmark.

\begin{table*}[t]
\centering
\caption{The three life-simulator sources. \emph{Pair cos} is the mean cosine (untrained Qwen3-Embedding-4B) between an earlier statement and its later messages on a 150-pair pilot; lower means less similar. Rows are training pairs (later message $\to$ earlier statement, plus one direct question per statement) after decontamination; they include every later message the simulator returned, and some statements received more than three. v2 and v2d happen to keep the same number of rows.}
\label{tab:app-data}
\setlength\tabcolsep{3pt}
{\footnotesize
\setlength{\aboverulesep}{0pt}\setlength{\belowrulesep}{0pt}
\renewcommand{\arraystretch}{1.1}
\begin{tabular}{llccccc}
\toprule
\headrow Source & Generator & Calls & Cost (\$) & Statements & Pair cos & Rows kept (removed) \\
\midrule
v2 (static facts) & gpt-4.1-mini & 800 & 1.61 & 8{,}000 & 0.422 & 31{,}697 ($\approx$1\%) \\ %
v3 (trajectories) & DeepSeek-V4.1-Flash & 800 & 0.79 & 7{,}975 & 0.398 & 32{,}419 (49) \\ %
v2d (v2 prompt) & DeepSeek-V4.1-Flash & 800 & 0.73 & 8{,}136 & -- & 31{,}697 (1{,}092) \\ %
\bottomrule
\end{tabular}}
\end{table*}

\textbf{Prompt of v3 (trajectories).} Placeholders in braces are filled per call.
\begin{quote}\footnotesize
You are simulating one real person's life over about a year, as it comes up in casual chats with a close friend. Persona: a \texttt{\{age\}}-year-old \texttt{\{job\}} living in \texttt{\{place\}}.
(1)~Write \texttt{\{n\}} different things this person told the friend early on, each as one or two casual first-person sentences. Every statement must be SPECIFIC (name the actual goal, plan, event, situation, person, feeling or principle and why it matters to them). Spread them over these life areas: \texttt{\{domains\}}. Use these kinds (one each, in any order): \texttt{\{kinds\}}.
(2)~For each statement, write 3 things the person says to the friend weeks or months later, in new situations, that are part of how their life actually unfolded from it: progress, a setback, giving up, a change of heart, a consequence, a new choice it shaped, or a dilemma it creates. Make the 3 developments differ from each other. HARD RULES for the later messages: The person does not refer back: they must NOT mention, hint at or allude to the earlier statement, its topic, its key objects, people, places or feelings. No ``remember when'', ``like I said'', ``because of that'', ``given my \dots''. Share no content words with the earlier statement; no paraphrase. A stranger reading the later message must see no link at all; a friend who remembers the earlier statement would immediately see how it explains or bears on the later one, and would answer differently because of it. Each must read as a normal standalone chat message, and for each later message only this one earlier statement of the person should matter. Good example of the spirit (do not reuse): earlier ``I'm putting every spare dollar toward a used sailboat so I can sail the coast next summer.'' $\to$ later ``Sold my car this morning, the bus is honestly fine.''
(3)~For each statement, also write one short question the friend might ask that plainly refers to it.
Vary tone, length and specificity. Avoid clich\'es and avoid repeating the same kind of development. Return JSON: \texttt{\{"facts": [\{"fact": ..., "later": [...], "direct": ...\}, ...]\}}
\end{quote}

\textbf{Prompt of v2 and v2d (static facts).} The v2 prompt has the same three-step structure and the same no-leak rules. It differs in what is simulated: the persona chats with an AI assistant, step~(1) asks for specific personal facts (a named condition, medication, diet, practice, rule, obligation, possession, place, event, goal or limitation) drawn from 22 fact kinds, and step~(2) asks for requests, plans, purchases, invitations, updates or complaints in which ``a knowledgeable assistant who remembers [the fact] should change its answer because of real-world consequences''. Its in-prompt example is a blood thinner followed by a request for mountain-biking tips. v2d reuses this prompt and the v2 persona seeds with a different generator. %

\textbf{Samples.} From v3 (a 51-year-old pilot): the earlier statement ``Signed up for the Tuesday/Thursday masters swim at the rec pool \dots\ I'm weirdly nervous about being the slowest guy in the lane'' is paired with the later message ``Knees are shot from all the yardage, so I moved to the bike for a while.'' From v2 (a 73-year-old pharmacist): ``I have type 2 diabetes and follow a strict low-carb diet'' is paired with ``I need ideas for snacks to bring on my morning train ride that won't cause an afternoon crash.'' %

\textbf{Decontamination.} We embed every training text with the untrained Qwen3-Embedding-4B and delete any training pair whose cosine to any test text of LoCoMo-Plus, LoCoMo-Conv (implicit queries), InMind or ImplexConv~\citep{implexconv} is at least 0.85. %
This removes about 1\% of v2, 49 rows of v3 and 1{,}092 rows of v2d. %
Most v2d removals are generic direct questions such as ``What is my diet?'' that coincide with a test text. %

\textbf{Generator choice.} On a pilot of five personas (150 pairs) per generator, the mean cosine between an earlier statement and its later messages is 0.511 for gpt-4.1-mini with the v3 prompt, against 0.397 for gpt-4.1 and 0.398 for DeepSeek-V4.1-Flash; manual inspection showed that gpt-4.1-mini often continues the earlier topic directly. %
We therefore generate v3 with DeepSeek-V4.1-Flash and disable its thinking mode, which truncates the JSON output. %

\section{Training, Evaluation and Latency Details}
\label{app:hparams}

Table~\ref{tab:app-hparams} lists the hyperparameters. The memory bank of each training step is the set of all training facts, encoded once by the frozen backbone. For every query in a batch, facts from other sources and facts whose cosine to the gold fact exceeds 0.9 are masked out of the softmax. %
After holding out 5\% of the personas, the three sources give 91{,}067 training pairs; one epoch of the 4B model takes 711 steps. %
Queries carry the instruction ``Given a new message in a long conversation, retrieve earlier conversation content that is relevant to it'' in the Qwen3-Embedding format, both in training and at test time; memories carry no instruction. %

\begin{table}[h]
\centering
\caption{Hyperparameters of the query-side association adapter (identical for all backbones and seeds unless stated).}
\label{tab:app-hparams}
\footnotesize
\setlength{\aboverulesep}{0pt}\setlength{\belowrulesep}{0pt}
\renewcommand{\arraystretch}{1.1}
\begin{tabular}{lp{0.45\columnwidth}}
\toprule
\headrow Item & Value \\
\midrule
Backbone & Qwen3-Embedding-4B (also 0.6B, 8B)~\citep{qwen3emb} \\
Adapter & LoRA~\citep{lora}, $r=32$, $\alpha=64$, dropout 0.05 \\ %
LoRA targets & q, k, v, o, gate, up, down projections \\ %
Trainable parameters (4B) & 66.1M \\ %
Loss & InfoNCE on the summed score, $\tau=0.05$ \\ %
Negatives & all same-source training facts \\ %
Near-duplicate mask & cosine to gold fact $>0.9$ \\ %
Optimizer & AdamW, lr $10^{-4}$, weight decay 0.01 \\ %
Schedule & 5\% linear warm-up, cosine decay \\ %
Batch size, epochs & 128, 1 (711 steps for 4B) \\ %
Max sequence length & 128 tokens \\ %
Gradient clipping & 1.0 \\ %
Validation split & 5\% of simulated personas, monitored only; the final checkpoint is used \\ %
Precision & bf16, gradient checkpointing \\ %
Seeds & 0, 1, 2 \\ %
\bottomrule
\end{tabular}
\end{table}

\textbf{Retrieval evaluation.} A LoCoMo-Plus memory base is one LoCoMo conversation plus the inserted cue dialogue, with one memory per turn (about 600 per item); the trigger itself is not stored. The rank of an item is one plus the number of base turns scored above the best cue turn. %
The same units and ranks are used for all local retrievers.

\textbf{Latency.} On one RTX 4090D (bf16, batch size 1, tokenization included, median of 200 runs), the untrained 4B encodes a query in 37.9\,ms. \ourmethod\ with an unmerged LoRA, as used in all experiments, takes 145.0\,ms; with the LoRA merged into a second copy of the weights it takes 70.5\,ms, about two backbone passes. %
Scoring about 600 memories and taking the top 10 costs 0.04\,ms. %

\section{Official Protocol and Reproduced Baselines}
\label{app:protocol}

\textbf{Protocol.} We evaluate all 401 cognitive items of LoCoMo-Plus; item $i$ uses LoCoMo conversation $i \bmod 10$, as in the official code. The reader input is built by the official \texttt{\_build\_model\_input} with the Cognitive instruction; every selected turn is rendered as \texttt{X said, "\dots"} in time order, and the trigger is the last line. A self-test confirms that our formatting is byte-identical to the official stitching on all 401 items. %
For $K$ memories we take the top-$K$ retrieved turns, add the preceding and following turn of the same session to each, and sort the result by time. This gives about 8, 14 and 28 dialogue lines and about 375 and 565 reader input tokens for $K=3$ and $K=5$. %
The reader uses temperature 0.3 and at most 1{,}024 tokens; the judge uses temperature 0, at most 512 tokens, the verbatim official judge input and the official parser, and only \emph{correct} scores a point. %
Photo captions stored with some turns are used for retrieval but removed from the reader input, matching the official whole-conversation input. %
All numbers use the verbatim official judge input, and significance is an exact McNemar test on per-item labels~\citep{mcnemar1947}.

\textbf{Published whole-conversation numbers.} The LoCoMo-Plus paper reports 21.05 for GPT-4o reading the whole conversation and 26.06 for Gemini-2.5-Pro under its protocol. %
Running the released code, GPT-4o with the whole conversation scores 44.8 on a 58-item subset (every seventh item; $p=7\times10^{-5}$ against 21.05), and gpt-4.1-mini scores 44.9 on all 401 items. %
JustMem reports 21.70 for the whole-conversation baseline with gpt-4.1-mini as reader and judge; the released code with the same models gives 37.7 ($p=7\times10^{-7}$). %
We therefore compare only against baselines re-run in the same code, and list published numbers from other protocols separately.

\section{Full Results}
\label{app:full}

\begin{table}[t]
\centering
\caption{LoCoMo-Plus under the official protocol (gpt-4.1-mini reader unless stated) for $K=3,5,10$ (official gemini-2.5-flash judge). FullText has no $K$.}
\label{tab:app-full}
\setlength\tabcolsep{2.5pt}
{\footnotesize
\setlength{\aboverulesep}{0pt}\setlength{\belowrulesep}{0pt}
\renewcommand{\arraystretch}{1.1}
\begin{tabular}{lccc}
\toprule
\headrow Retriever & $K$=3 & $K$=5 & $K$=10 \\
\midrule
\oursrow \ourmethod\ (4B), seed 0 & 52.4 & 53.1 & 54.1 \\ %
\oursrow \ourmethod\ (4B), seed 1 & 51.6 & 53.9 & 51.9 \\ %
\oursrow \ourmethod\ (4B), seed 2 & 53.1 & 54.4 & 53.9 \\ %
\oursrow \ourmethod\ (4B), mean & 52.4 & 53.8 & 53.3 \\ %
Untrained Qwen3-Embedding-4B & 39.7 & 47.4 & 49.9 \\ %
gemini-embedding & 41.6 & 45.9 & 46.4 \\ %
text-embedding-3-large & 41.1 & 46.4 & 47.4 \\ %
Qwen3-Embedding-8B & 37.4 & 43.4 & 42.6 \\ %
bge-m3 & 21.9 & 25.7 & 34.4 \\ %
FullText (whole conversation) & \multicolumn{3}{c}{44.9} \\ %
\midrule
\oursrow \ourmethod\ (8B) & 53.4 & 55.4 & 51.1 \\ %
Untrained 8B (local run) & 36.7 & 41.6 & 41.6 \\ %
\midrule
\multicolumn{4}{l}{\emph{GPT-4o reader}} \\
\oursrow \ourmethod\ (4B), seed 0 & 50.1 & -- & -- \\ %
Untrained Qwen3-Embedding-4B & 40.4 & -- & -- \\ %
gemini-embedding & 35.2 & -- & -- \\ %
\bottomrule
\end{tabular}}
\end{table}

\begin{table}[h]
\centering
\caption{Plug-in results under the official protocol, including T-Mem at its default of 10 scenes. Only the retrieval encoder differs within each system (T-Mem's memories are re-encoded by the frozen encoder); \ourmethod\ is seed 0.}
\label{tab:app-system}
\setlength\tabcolsep{2.5pt}
{\footnotesize
\setlength{\aboverulesep}{0pt}\setlength{\belowrulesep}{0pt}
\renewcommand{\arraystretch}{1.1}
\begin{tabular}{llc}
\toprule
\headrow System & Query encoder & Score \\
\midrule
HyperMem & original & 52.9 \\ %
HyperMem & untrained 4B (with instruction) & 60.6 \\ %
\oursrow HyperMem & \ourmethod & \textbf{66.6} \\ %
\midrule
T-Mem, 3 scenes & original (bge-m3) & 33.7 \\ %
T-Mem, 3 scenes & untrained 4B & 53.9 \\ %
\oursrow T-Mem, 3 scenes & \ourmethod & \textbf{59.9} \\ %
T-Mem, 10 scenes & original (bge-m3) & 35.2 \\ %
T-Mem, 10 scenes & untrained 4B & 45.4 \\ %
\oursrow T-Mem, 10 scenes & \ourmethod & 49.9 \\ %
\bottomrule
\end{tabular}}
\end{table}

Table~\ref{tab:app-full} gives all retrievers for $K=3,5,10$, and Table~\ref{tab:app-system} the plug-in results, including T-Mem at its default of 10 scenes, where \ourmethod\ is 14.7 points above the original ($p=8\times10^{-8}$) and 4.5 points above the untrained 4B ($p=0.11$). %
Plugged into HyperMem, the other two seeds of \ourmethod\ score 68.6 and 63.3 (66.2 on average over three seeds), each above every other system we evaluate and significantly above the released HyperMem ($p\le9\times10^{-5}$); against the instructed HyperMem, the differences are $+8.0$ ($p=0.0003$) and $+2.7$ ($p=0.25$), and the three-seed mean is $+5.6$ (95\% CI $[+2.0,+9.2]$, paired sign-flip test on per-question accuracy averaged over seeds, $p=0.0035$). %

\textbf{Human-written triggers.} To test whether the gain comes from the style of LLM-written questions, four annotators rewrote the trigger of 200 LoCoMo-Plus items (50 per relation type), seeing only the cue dialogue, the time gap and the relation type but not the original trigger, and a fifth person checked each one; memories, reader and judge are unchanged. %
Human triggers share far more words with their cues (BM25 finds 39\% of cues in the top 10, against 6\% for the original triggers), so every retriever improves. %
\ourmethod\ improves as well rather than degrading: it ranks the cue first for 80.0\% of the items, up from 59.5\% ($p<10^{-4}$), the highest among all retrievers and significantly above its untrained backbone ($+7.5$, $p=0.0013$). %
On the 99 items whose trigger shares no content word with the cue, where similarity fails, it ranks the cue first significantly more often than every other retriever ($p\le0.043$), and its end-to-end gain over the untrained backbone is even larger than on the original triggers ($+12.1$, $p=0.016$, against $+10.4$). %

\textbf{Ordinary LoCoMo QA.} We follow the T-Mem LoCoMo protocol (categories 1--4, 1{,}540 questions, gpt-4o-mini reader and judge, three judgments per question, $K=10$) and compare each retriever with its own untrained backbone by a paired bootstrap and a sign-flip test. %
On 4B, the untrained backbone scores 59.1 and the three seeds of \ourmethod\ score 58.9, 59.2 and 60.0 (differences $-0.2$, $+0.1$ and $+0.9$; $p=0.86$, $0.96$, $0.29$). %
On 8B, the untrained backbone scores 63.5 and \ourmethod\ 60.9, a drop of 2.6 points (95\% CI $-4.1$ to $-1.0$, $p=0.0015$); the share of questions with evidence in the top 10 drops from 72.5\% to 67.4\%. %
The loss comes from direct single-hop questions, where the association term pulls the ranking away from the literally best-matching turn towards related situations; for the earlier v2-only 4B retriever, 91 questions lost their evidence from the top 10 and 62 gained it. %
The extra v2d source, which removes this cost on 4B, does not remove it on 8B; we conjecture that the stronger literal matching of the 8B backbone makes the cost larger.

\section{T-Mem Under Its Own Protocol}
\label{app:tmem}

We also ran T-Mem with its released code and its own LoCoMo-Plus protocol: one memory base per item built by gpt-4.1-mini, three-way embedding retrieval fused by RRF, the top 10 scenes, a GPT-4o reader, and T-Mem's own answer and judge prompts with gemini-2.5-flash as judge. %
The three arms share the same 401 memory bases and differ only in the embedding used for retrieval; for \ourmethod\ and the untrained 4B, memories are encoded by the frozen 4B. Table~\ref{tab:app-tmem} gives the results. %
With the original bge-m3 retriever, T-Mem scores 51.9; with the untrained 4B, 67.6; with \ourmethod, 70.6 (against bge-m3: $p=1\times10^{-13}$; against the untrained 4B: $p=0.23$). %
When only the top three scenes are given to the reader, \ourmethod\ scores 72.6 against 64.1 for the untrained 4B ($p=1.5\times10^{-4}$) and 41.4 for bge-m3. %
The T-Mem paper reports 74.81 on LoCoMo-Plus. Our run of the released code with the models named in the paper gives 51.9. The released code retrieves 10 scenes with an RRF constant of 30, whereas the paper describes 5 scenes with a constant of 60. %

The released trigger-generation prompt, which is applied to all 401 memory bases, contains two few-shot examples whose cue text is that of LoCoMo-Plus test items \#81 and \#188, and its negative examples include text from items \#3, \#188 and \#306. %
Excluding these items together with \#271 and \#275, whose structure closely matches one of the examples, gives 51.6 for bge-m3 and 70.6 for \ourmethod. %

\begin{table*}[t]
\centering
\caption{T-Mem under its own released code and evaluation protocol (401 LoCoMo-Plus items, GPT-4o reader, T-Mem's answer and judge prompts). Only the embedding used by its retrieval stage changes; the memory bases are shared by all arms. \emph{Cue@k}: the scene containing the cue is among the $k$ retrieved scenes.}
\label{tab:app-tmem}
\setlength\tabcolsep{3pt}
{\footnotesize
\setlength{\aboverulesep}{0pt}\setlength{\belowrulesep}{0pt}
\renewcommand{\arraystretch}{1.1}
\begin{tabular}{lccccccc}
\toprule
\headrow Retriever in T-Mem & All & Causal & Goal & State & Value & Cue@10 & 3 scenes (Cue@3) \\
\midrule
bge-m3 (original) & 51.9 & 64.4 & 44.0 & 59.0 & 40.0 & 70.1 & 41.4 (40.4) \\ %
Untrained Qwen3-Embedding-4B & 67.6 & \textbf{78.2} & 62.0 & 67.0 & 63.0 & 89.3 & 64.1 (72.6) \\ %
\oursrow \ourmethod & \textbf{70.6} & \textbf{78.2} & \textbf{67.0} & \textbf{72.0} & \textbf{65.0} & \textbf{94.5} & \textbf{72.6} (\textbf{84.3}) \\ %
\bottomrule
\end{tabular}}
\end{table*}

\section{Other Benchmarks}
\label{app:other}

\begin{table*}[t]
\centering
\caption{Other benchmarks. Retrieval rows are Recall@10 (\%); end-to-end rows use each benchmark's own protocol. \ourmethod\ is seed 0 in the first two LoCoMo-Conv rows and the three-seed mean in the other rows.}
\label{tab:app-other}
\setlength\tabcolsep{3pt}
{\footnotesize
\setlength{\aboverulesep}{0pt}\setlength{\belowrulesep}{0pt}
\renewcommand{\arraystretch}{1.1}
\begin{tabular}{llc>{\columncolor{oursbg}}cc}
\toprule
\headrow Benchmark & Metric & Untrained 4B & \ourmethod & text-emb-3-large \\
\midrule
LoCoMo-Conv (1,540) & Recall@10 (paper metric) & 46.2 & 48.1 & \textbf{49.6} \\ %
LoCoMo-Conv & answer score, $K$=10 & 0.440 & \textbf{0.456} & 0.453 \\ %
LoCoMo-Conv, user turns only & answer score, $K$=10 & 0.456 & 0.482 & \textbf{0.489} \\ %
\midrule
InMind single fact (125) & indirect query & 5.6 & \textbf{27.7} & 0.8 \\ %
InMind single fact & direct query & \textbf{96.8} & 93.1 & 80.8 \\ %
\bottomrule
\end{tabular}}
\end{table*}

Table~\ref{tab:app-other} collects the remaining benchmarks; each uses its own protocol and memory units.

\textbf{LoCoMo-Conv}~\citep{locomoconv}. We follow the released pipeline (one memory per turn, top 10, gemma-4-31B-it reader, the three-level gpt-5.4-mini judge). Our reproduction of its MiniLM baseline gives Recall@10 32.1 and an answer score of 0.332, against 31.2 and 0.335 in the paper. %
\ourmethod\ is 1.6 points above the untrained 4B ($p=0.11$) and ties text-embedding-3-large ($p=0.86$). %
Restricting candidates to the user's own turns, which the official prompt identifies, raises every retriever by 4.6 to 6.6 Recall@10 points. With this filter all three seeds are significantly above the untrained 4B ($p=0.011$, $0.016$, $0.0021$) and tie text-embedding-3-large ($p\geq0.42$). %
90.4\% of the evidence turns are spoken by the user, and the typical retrieval error is the right topic but the wrong turn. %

\textbf{InMind}~\citep{inmind}. In the single-fact setting, one personal fact is hidden among about 2{,}600 LongMemEval~\citep{longmemeval} user messages and retrieved by an indirect or a direct question (125 items). \ourmethod\ raises indirect Recall@10 from 5.6\% to 27.7\% (seeds 28.8, 28.8, 25.6) while direct Recall@10 moves from 96.8\% to 93.1\%. %

\section{Ablations}
\label{app:ablation}

Table~\ref{tab:ablation} gives the full ablation of Section~\ref{sec:rq4}, and Table~\ref{tab:app-relation} gives cue Recall@10 per relation type.

Placing the same supervision in a reranker on top of the untrained 4B reaches a cue Recall@10 of only 75.3, below the query-side retriever alone (78.8), while costing 20 extra 4B forward passes per query. %
On ordinary LoCoMo QA with the 4B backbone, the v2-only retriever loses 2.7 points against the untrained backbone ($p=0.0025$) and the two-source recipe 1.6 points on average, whereas the three-source recipe changes it by $+0.3$ on average. %
At $K=5$, two of the three seeds are significantly above the untrained 4B end to end; at $K=10$, none is. %

\section{Routes That Do Not Learn Association}
\label{app:routes}

Table~\ref{tab:routes} lists every route of Section~\ref{sec:rq3} with its significance against \ourmethod{}.
The EAR-style adapter follows the structure of EAR~\citep{ear}: on the frozen 4B, two $2560{\times}2560$ matrices initialized at zero map the query to $(I+W_q)q$ and the memory to $(I+W_m)m$, scored by cosine (13.1M parameters). It is trained on our data with our loss, negatives and temperature, and its epoch is chosen on held-out simulated personas, not on any benchmark. %
The reasoning-oriented retrievers reach a cue Recall@10 between 33.9 and 60.6, against 71.1 for the untrained backbone and 82.8 for \ourmethod{} (seed~0). %
The best QueryLink configuration passes 1,301 answer-input tokens per query and is 16.5 points below \ourmethod{}. %
The two rerankers cut the share of cues ranked first by \ourmethod{} (seed~0) from 62.3\% to 35.2\% (Qwen3-Reranker-4B) and 17.7\% (MemReranker-4B). %

\section{Reproduction of QueryLink and HyperMem}
\label{app:repro}

\textbf{QueryLink}~\citep{querylink}. We run the release branch of the official repository (commit \texttt{cbf1e0d}) with its three prompts verbatim, gpt-4o-mini as the rewriting model and all-MiniLM-L6-v2 as the encoder. %
Each turn is a unit, and its views are generated from the turn and the two neighbouring turns of the same session, as in its LoCoMo adapter; the query is the trigger. %
The upstream code parses the rewriting output with a strict JSON parser, which drops the LLM views whenever the model wraps its answer in a code block; we keep this behaviour, so about 55.6\% of the turns retain only the raw-text view. Stripping the code-block markers before parsing, which keeps all views, does not help QueryLink: its three configurations then score 20.9, 33.7 and 30.4 instead of 23.7, 35.9 and 34.4. %
The three rows are the original configuration restricted to three memories, its native multi-view union (about 1{,}600 context tokens), and its rewrites encoded by the frozen Qwen3-Embedding-4B. A fourth row uses only the ``guess related events'' output as a HyDE-style query~\citep{hyde} on the same frozen 4B and scores 41.9. %
Selecting three memories from several views uses RRF ($k=60$) over per-view maxima. %
Writing costs about 590 LLM calls and 243K tokens per memory base; reading costs two calls per query. %

\textbf{HyperMem}~\citep{hypermem}. We run the official repository (commit \texttt{15c7009}) with gpt-4.1-mini at temperature 0 for extraction and the Qwen3-Embedding-4B served by OpenRouter, whose vectors match our local model at cosine about 0.9998. %
The ten base conversations are built once with the official functions; for each item the cue dialogue is segmented, assigned to topics and turned into facts by the official extractors and merged into the base hypergraph. %
Retrieval uses the code defaults (candidate pool 100; 15 topics, 20 episodes and 30 facts; reranking off), and the reader receives episodes and facts as in its answer stage. %
Building one memory base costs 1{,}095 LLM calls and about 3.55M tokens on average. %
For the plug-in, the query is embedded once by the new encoder and that vector replaces all three query encodings of the retrieval stage; the BM25 half of its hybrid search still uses the raw query text. %
The untrained 4B plug-in adds the Qwen3-Embedding query instruction, which the original configuration omits; this accounts for 7.7 of the 13.7-point gain, and learned association for the other 6.0. %

\begin{table*}[tp]
\centering
\caption{Cue Recall@10 (\%) on LoCoMo-Plus by relation type. v2 is a single simulator source (gpt-4.1-mini, concrete facts followed by requests to an assistant); the full recipe mixes v2, v3 and v2d (seed~0). $^{*}$Significantly above the untrained backbone of the same size ($p<0.05$, exact McNemar).}
\label{tab:app-relation}
\footnotesize
\setlength{\tabcolsep}{4pt}
\setlength{\aboverulesep}{0pt}\setlength{\belowrulesep}{0pt}
\renewcommand{\arraystretch}{1.1}
\begin{tabularx}{\textwidth}{>{\hsize=1.5\hsize}X*{5}{>{\centering\arraybackslash\hsize=0.9\hsize}X}}
\toprule
\headrow Retriever & Causal & State & Goal & Value & All \\
\midrule
Untrained 0.6B & 50.5 & 41.0 & 33.0 & 19.0 & 35.9 \\
0.6B, v2 only & 72.3$^{*}$ & 61.0$^{*}$ & 53.0$^{*}$ & 51.0$^{*}$ & 59.4$^{*}$ \\
\midrule
Untrained 4B & 89.1 & 65.0 & 67.0 & 63.0 & 71.1 \\
4B, v2 only & 90.1 & 82.0$^{*}$ & 72.0 & 71.0$^{*}$ & 78.8$^{*}$ \\
\oursrow 4B, full recipe & 90.1 & 85.0$^{*}$ & 79.0$^{*}$ & 77.0$^{*}$ & 82.8$^{*}$ \\
\bottomrule
\end{tabularx}
\end{table*}
\begin{table*}[tp]
\centering
\caption{Ablations of the design choices in Section~\ref{sec:method}. Columns 2--3 are retrieval Recall@10 (\%); \emph{LP} (LoCoMo-Plus) tests associative recall and \emph{LoCoMo} ordinary lookup. The last column is the end-to-end change on ordinary LoCoMo QA relative to the untrained backbone (mean over seeds where three exist). Shaded rows are the configuration used in the paper for that block; bold marks the best LP Recall@10 in each block.}
\label{tab:ablation}
\setlength\tabcolsep{3pt}
{\footnotesize
\setlength{\aboverulesep}{0pt}\setlength{\belowrulesep}{0pt}
\renewcommand{\arraystretch}{1.1}
\begin{tabularx}{\textwidth}{>{\hsize=2.2\hsize}X*{3}{>{\centering\arraybackslash\hsize=0.6\hsize}X}}
\toprule
\headrow Variant & LP & LoCoMo & LoCoMo QA $\Delta$ \\
\midrule
\multicolumn{4}{l}{\emph{(a) Which side to train, and which score to train on} \quad [0.6B, v2 data]} \\
Untrained 0.6B & 35.9 & 67.3 & -- \\ %
Both sides, full fine-tuning & 49.6 & 55.1 & -- \\ %
Both sides, LoRA & 50.1 & 56.1 & -- \\ %
Query side, association term trained alone & 54.1 & 64.5 & -- \\ %
\oursrow Query side, residual (summed) score & \textbf{59.4} & 60.8 & -- \\ %
\midrule
\multicolumn{4}{l}{\emph{(b) Mixing simulator sources} \quad [0.6B, query side, residual score]} \\
v2 only & 59.4 & 60.8 & -- \\ %
v3 only & 57.6 & 67.3 & -- \\ %
v2 + v3, all negatives & 50.1 & 65.3 & -- \\ %
v2 + v3, same-source negatives & \textbf{63.6} & 64.0 & -- \\ %
\oursrow v2 + v3 + v2d, same-source negatives & 60.6 & 66.2 & -- \\ %
\midrule
\multicolumn{4}{l}{\emph{(c) Main backbone} \quad [4B]} \\
Untrained 4B & 71.1 & 66.1 & 0.0 \\ %
Both sides, LoRA (v2) & 69.6 & 68.9 & -- \\ %
Query side, v2 only & 78.8 & 62.5 & $-$2.7$^{*}$ \\ %
Query side, v2 + v3, same-source (3 seeds) & \textbf{84.3} & 61.6 & $-$1.6 \\ %
\oursrow Query side, v2 + v3 + v2d, same-source (3 seeds) & 83.1 & 64.2 & $+$0.3 \\ %
\bottomrule
\end{tabularx}}
\vspace{2pt}
\parbox{\textwidth}{\footnotesize $^{*}$\,$p<0.01$, paired sign-flip test against the untrained 4B. For the v2\,+\,v3 recipe, two of three seeds are significantly below the untrained 4B ($p=0.03$ and $0.025$); for the full recipe no seed is ($p\geq0.29$).}
\end{table*}

\begin{table*}[tp]
\centering
\caption{Routes that do not learn association, on LoCoMo-Plus (official protocol, $K{=}3$). \textbf{Read}: LLM calls per query at retrieval time. \textcolor{dropc}{$\downarrow$}: score drop relative to \ourmethod{} (seed~0), computed before rounding; $p$: exact McNemar test against it. Rerankers re-score the top 50 candidates of \ourmethod{}.}
\label{tab:routes}
\footnotesize
\setlength{\tabcolsep}{3.5pt}
{\footnotesize
\setlength{\aboverulesep}{0pt}\setlength{\belowrulesep}{0pt}
\renewcommand{\arraystretch}{1.1}
\begin{tabularx}{\textwidth}{>{\hsize=1.8\hsize}X*{4}{>{\centering\arraybackslash\hsize=0.8\hsize}X}}
\toprule
\headrow \textbf{Method} & \textbf{Read} & \textbf{Score} & \textbf{$\Delta$} & \textbf{$p$} \\
\midrule
\oursrow \ourmethod{} (seed 0)                & 0 & \textbf{52.4} & -- & -- \\ %
Untrained Qwen3-Embedding-4B         & 0 & 39.7 & \drop{12.7} & $1.2{\times}10^{-5}$ \\ %
\midrule
\multicolumn{5}{l}{\textit{(i) Read-time LLM rewriting or guessing}} \\
QueryLink, original config. & 2 & 23.7 & \drop{28.7} & $1.6{\times}10^{-19}$ \\ %
QueryLink, native union              & 2 & 35.9 & \drop{16.5} & $1.3{\times}10^{-7}$ \\ %
QueryLink, rewriting + Qwen3-4B      & 2 & 34.4 & \drop{18.0} & $9.0{\times}10^{-9}$ \\ %
HyDE, Qwen3-4B           & 1 & 41.9 & \drop{10.5} & $2.9{\times}10^{-4}$ \\ %
\midrule
\multicolumn{5}{l}{\textit{(ii) Reasoning-oriented retrievers}} \\
ReasonIR-8B          & 0 & 40.6 & \drop{11.7} & $8.3{\times}10^{-5}$ \\ %
ReasonEmbed-Qwen3-4B & 0 & 29.9 & \drop{22.4} & $4.6{\times}10^{-13}$ \\ %
Diver-Retriever-4B      & 0 & 22.4 & \drop{29.9} & $3.5{\times}10^{-21}$ \\ %
Diver-Retriever-4B-1020              & 0 & 31.7 & \drop{20.7} & $1.4{\times}10^{-11}$ \\ %
\midrule
\multicolumn{5}{l}{\textit{(iii) Rerankers on top of \ourmethod{}}} \\
+ Qwen3-Reranker-4B & 0 & 40.9 & \drop{11.5} & $1.6{\times}10^{-4}$ \\ %
+ MemReranker-4B  & 0 & 31.9 & \drop{20.4} & $1.4{\times}10^{-10}$ \\ %
\midrule
\multicolumn{5}{l}{\textit{(iv) Same data, different trained structure}} \\
EAR-style adapter         & 0 & 39.4 & \drop{13.0} & $1.3{\times}10^{-5}$ \\ %
EAR-style adapter, fused score       & 0 & 45.1 & \drop{7.2} & 0.019 \\ %
\bottomrule
\end{tabularx}}
\end{table*}

\end{document}